\documentclass[runningheads]{llncs}

\usepackage{amsmath,amssymb,amsfonts}
\usepackage{graphicx}

\usepackage[T1]{fontenc}
\usepackage{microtype}
\usepackage{xcolor}

\usepackage{hyperref}
\usepackage{url}

\newcommand{\kB}{k_{\mathrm{B}}}

\begin{document}

\title{Thermodynamic Limits of Physical Intelligence}
\titlerunning{Thermodynamic Limits of Physical Intelligence}
\author{Koichi Takahashi\inst{1,2} \and
Yusuke Hayashi\inst{1}}
\authorrunning{K. Takahashi and Y. Hayashi}
\institute{AI Alignment Network (ALIGN) \and
RIKEN Advanced General Intelligence for Science (AGIS) Program\\
\email{ktakahashi@riken.jp}}

\maketitle

\begin{abstract}
Modern AI systems achieve remarkable capabilities at the cost of substantial energy consumption. To connect intelligence to physical efficiency, we propose two complementary bits-per-joule metrics under explicit accounting conventions: (1) Thermodynamic Epiplexity per Joule, new bits of structure about a specified environment-instance variable encoded in an agent's state per unit energy, and (2) Empowerment per Joule, sensorimotor channel capacity per expected energetic cost over a fixed horizon. These give two axes of physical intelligence, recognition vs.\ control, but the resulting numbers are benchmark-relative rather than universal. Drawing on stochastic thermodynamics, we formulate a Landauer-scale closed-cycle benchmark for epiplexity acquisition by combining a thermodynamic-learning inequality with data processing, and clarify why boundary closure is required; conversely, a decoupling construction shows that without such assumptions information gain and in-boundary dissipation need not be tightly linked. For empirical settings where the latent structure variable is unavailable, we recommend compute-bounded MDL epiplexity / compression-gain surrogates. Finally, we propose a unified efficiency framework with a minimal checklist of conventions for relative bits-per-joule comparisons, and give a compact language-model reporting example.
\end{abstract}

\section{Introduction}

AI capabilities have grown dramatically in recent years, but at the expense of equally dramatic increases in energy consumption. Large-scale training runs for foundation models consume megawatt-hours of electricity, prompting concerns about the environmental footprint and physical sustainability of advanced AI. Biological intelligence, by contrast, achieves high-level cognition with only $\sim 20\,\mathrm{W}$ (the human brain)---orders of magnitude more efficient. This gap motivates a fundamental question: How can we formally define and quantify the energy efficiency of an intelligent system, and what thermodynamic limits constrain it? Answering this may also help forecast what lies beyond current AI scaling trends by clarifying the role of thermodynamic constraints \cite{takahashi2023scenarios}.

This paper targets a reproducible efficiency report rather than a universal intelligence score: comparisons are only meaningful under explicit boundary, coarse-graining/noise, horizon/reset, and cost conventions. We therefore distinguish two uses of the metrics. As absolute physical-efficiency targets, the Landauer-scale limits are conceptual yardsticks far beyond current hardware. As relative comparison tools across architectures, embodiments, or training procedures at fixed benchmarks and operating points, the same bits/J ratios are immediately actionable.

\subsection{Accounting and measurement conventions}\label{sec:accounting}
Bits-per-joule metrics depend on what energy flows are counted. Throughout, we distinguish (i) the measured energy consumption inside an explicit accounting boundary, $E_{\mathrm{cons}}$ (J), from (ii) the thermodynamic dissipation $Q_{\mathrm{diss}}$ (heat released to an isothermal bath) that appears in stochastic-thermodynamic inequalities. For a generic episode, an energy balance can be written as
\begin{align}
  E_{\mathrm{cons}}
  \;=\;
  Q_{\mathrm{diss}}
  \;+
  \Delta U_{\mathrm{sys}}
  \;+
  W_{\mathrm{out}}
  \;+
  \Delta E_{\mathrm{store}},
  \label{eq:energy_balance}
\end{align}
where $\Delta U_{\mathrm{sys}}$ is the change in internal energy of degrees of freedom inside the boundary, $W_{\mathrm{out}}$ is exported work (useful work delivered outside the boundary), and $\Delta E_{\mathrm{store}}$ captures energy stored and later recovered (e.g.\ batteries, springs, capacitors, or mechanical potential energy). When $\Delta U_{\mathrm{sys}}$, $W_{\mathrm{out}}$, and $\Delta E_{\mathrm{store}}$ are negligible over the evaluation interval, one can justify the approximation $E_{\mathrm{cons}}\approx Q_{\mathrm{diss}}$; otherwise, substituting $E_{\mathrm{cons}}$ for $Q_{\mathrm{diss}}$ is a reporting convention and the additional terms should be reported (or bounded) explicitly.
Finally, the accounting boundary must include externally prepared low-entropy resources (e.g.\ freshly initialized memory and its maintenance) if one wishes to compare against Landauer-scaled benchmarks in repeated operation.
Throughout, when we invoke Landauer-scaled benchmarks we mean closed-cycle repeated-operation regimes with explicit boundary closure; the open-boundary decoupling construction (Proposition~\ref{prop:open_boundary_decoupling}) is included only as an accounting caution motivating this requirement.

\subsection{Related work}
Several recent works link intelligence and thermodynamics \cite{arxiv2502_15820}. WPI \cite{perrier2025wpi} connects task performance and energy via Landauer's principle \cite{landauer1961irreversibility}, whereas classical measures such as Legg--Hutter universal intelligence \cite{legg2007universal} and broader intelligence evaluation frameworks \cite{hernandez2017measure} abstract away physical resources. These approaches are informative but do not directly yield a standardized bits-per-joule comparison that separates learning (recognition/model building) from control (action influence) under explicit accounting conventions.
Thermodynamic learning inequalities relate information acquisition in stochastic learning dynamics to entropy production under explicit subsystem assumptions \cite{goldt2017thermodynamic}, while related work on the thermodynamics of prediction relates dissipation to storing non-predictive information \cite{still2012thermodynamics}. Empowerment formalizes task-agnostic control capacity as a sensorimotor channel capacity \cite{klyubin2005empowerment}. Our contribution is to unify these strands into a two-axis reporting protocol with the stated conventions.
Epiplexity as compute-bounded structural information was recently formalized via time-bounded MDL \cite{epiplexity2026}. That work defines epiplexity as the model-description component of a resource-bounded two-part code (with a companion time-bounded entropy term) and provides practical estimators (e.g.\ prequential coding) for modern ML systems. Because this operational companion is recent, we use it as empirical scaffolding; the thermodynamic bounds themselves rest on established stochastic-thermodynamic and information-theoretic results. In this paper, we use $I(W;Z)$ only as a normative target for controlled environment-instance benchmarks, and we adopt compute-bounded MDL epiplexity as the default operational companion when $Z$ is unavailable.

\paragraph{Contributions}
\begin{itemize}
  \item \textbf{Thermodynamic epiplexity per joule.} We define $\eta_{\mathcal{E}}\triangleq\Delta\mathcal{I}/E_{\mathrm{cons}}$ as a learning-efficiency metric (bits/J) for how much new environment structure is retained in an agent's internal state within an explicit accounting boundary. The conditioning on $W^{\mathrm{pre}}$ makes $\Delta\mathcal{I}$ an acquired, episode-level quantity rather than cumulative stored information. The closed-cycle Landauer-scale statement is a synthesis of known ingredients---thermodynamic learning plus conditional data processing---applied to AI-efficiency evaluation; when the latent structure variable is unavailable, we recommend compute-bounded MDL/compression surrogates \cite{epiplexity2026}.
  \item \textbf{Empowerment per joule.} We define empowerment as an embodied sensorimotor channel capacity over horizon $\tau$ and define $\eta_{\mathcal{C}}$ via a cost-constrained empowerment curve (and derived bits/J summaries), with reporting conventions (total vs.\ incremental energy) that avoid free-control artifacts; we relate this to classical capacity-per-unit-cost results \cite{verdu1990capacity}.
  \item \textbf{Unified efficiency framework.} We propose a unified efficiency framework for physical AI agents (Section~\ref{sec:unified_framework}) that jointly reports $\eta_{\mathcal{E}}$ and $\eta_{\mathcal{C}}$, together with a minimal reporting checklist (boundary and energy accounting, coarse-graining/noise, horizon/reset, cost baseline, and estimator details) to support consistent bits-per-joule comparisons.
\end{itemize}

\section{Thermodynamic Epiplexity: Learning Bits per Joule}

\subsection{Defining Epiplexity: Normative vs.\ Operational}
Intelligent agents improve their performance by learning regularities---statistical or causal structure---from interaction data. We use \emph{epiplexity}\footnote{The term is intended to evoke structure ``around'' an observed stream, complementing entropy: epiplexity tracks model structure rather than unexplained randomness.} to denote the amount of \emph{environmental structure encoded in an agent's internal state}, but we distinguish two layers: (i) a \emph{normative} mutual-information target relative to a benchmark-provided environment-instance variable $Z$, and (ii) an \emph{operational} compute-bounded MDL notion of structural information (also called epiplexity) designed for computationally bounded observers \cite{epiplexity2026}. This two-layer view lets us state thermodynamic benchmarks for a clean theoretical quantity while retaining a reproducible empirical companion when $Z$ is not available.

\paragraph{Generative-environment setup.}
To state a normative theoretical target for ``structure,'' we adopt a standard \emph{generative model} viewpoint. An ``environment instance'' is parameterized by a latent random variable $Z\sim p(z)$, and (conditional on $Z$) the agent's experience $X$ is generated according to an environment model $p_{\mathrm{env}}(x\mid Z)$. In interactive settings, $X$ can represent a trajectory and we can write $p_{\mathrm{env}}(o_{0:\tau-1}\mid a_{0:\tau-1}, Z)$; in either case, $Z$ collects the latent degrees of freedom (e.g.\ transition parameters, latent causes, or a causal graph) that govern the data-generating process. We emphasize that $Z$ is a theoretical variable for analysis and need not be observable in typical empirical benchmarks.

\paragraph{Choice of $Z$ and comparability.}
The numerical value of $I(W;Z)$ is defined relative to a specified notion of ``environment instance,'' i.e., a chosen generative family $\{p_{\mathrm{env}}(\cdot\mid Z)\}$. In complex real-world domains there may be no unique, objective choice of such a $Z$; accordingly, we treat the choice of $Z$ (and any quotienting of redundant reparameterizations) as part of the benchmark specification rather than as something inferred from data. When comparing systems, one should hold this specification fixed.
Where feasible, we recommend defining $Z$ to be \emph{minimal} with respect to the induced distribution of agent-accessible trajectories $X$ under the benchmark protocol (e.g.\ collapsing observationally equivalent instances), to reduce redundancy and discourage gaming by adding extraneous degrees of freedom.
In empirical benchmarks where $Z$ is not accessible, we do not attempt to estimate this normative mutual information and instead report compute-bounded MDL epiplexity / compression-gain companions under stated modeling and resource conventions (below) \cite{epiplexity2026}.

\paragraph{Definition (mutual-information epiplexity).}
Let $W$ denote the agent's internal state (weights, memory, beliefs). After processing experience under a learning resource budget $B$, the internal state changes from $W^{\mathrm{pre}}$ to $W^{\mathrm{post}}$ (where $B$ may include compute, time, or data). We define the agent's epiplexity after learning as the mutual information between its state and the environment's latent structure:
\begin{align}
  \mathcal{I} \;\triangleq\; I(W; Z)\qquad [\text{bits}].
\end{align}

\paragraph{Episode-level acquired epiplexity.}
The difference $I(W^{\mathrm{post}};Z)-I(W^{\mathrm{pre}};Z)$ can be negative if the update forgets or overwrites previously learned structure. To obtain a nonnegative ``how many new bits were acquired in this episode'' quantity, we define \emph{acquired epiplexity} as a conditional mutual information:
\begin{align}
  \Delta \mathcal{I} \;\triangleq\; I\!\left(W^{\mathrm{post}}; Z \,\middle|\, W^{\mathrm{pre}}\right),
\end{align}
which measures how many bits about $Z$ are newly encoded in $W^{\mathrm{post}}$ beyond what was already present in $W^{\mathrm{pre}}$. This quantity is inherently observer-dependent through $(W,Z)$ and the budget $B$: a more capable learner (larger $B$, richer inductive bias) can encode more information about $Z$.

\paragraph{$\varepsilon$-coarse-grained epiplexity.}
If $W$ or $Z$ are continuous, we define a coarse-grained version by applying a fixed quantizer $Q_\varepsilon(\cdot)$ (or an observation-noise model of scale $\varepsilon$) and computing mutual information on the resulting discrete variables:
\begin{align}
  \mathcal{I}_\varepsilon \triangleq I\!\left(Q_\varepsilon(W); Q_\varepsilon(Z)\right),
  \qquad
  \Delta \mathcal{I}_\varepsilon \triangleq I\!\left(Q_\varepsilon(W^{\mathrm{post}}); Q_\varepsilon(Z)\mid Q_\varepsilon(W^{\mathrm{pre}})\right).
\end{align}
Unless stated otherwise, we use $\Delta \mathcal{I}$ as shorthand for the appropriate (possibly coarse-grained) acquired-epiplexity quantity.

This definition targets information about the environment instance rather than raw dataset length or redundancy.

\paragraph{Operational epiplexity via compute-bounded MDL.}
Following Finzi et al.~\cite{epiplexity2026}, let $\mathcal{M}_B$ denote a model/program class constrained by a stated resource budget $B$ (e.g.\ runtime, memory, or training budget). Define the (two-part) resource-bounded MDL objective
\begin{align}
  \mathrm{MDL}_B(X)
  &\;\triangleq\;
  \min_{M\in\mathcal{M}_B}\bigl[L(M)+L(X\mid M)\bigr],\\
  M_B^\star &\in \arg\min_{M\in\mathcal{M}_B}\bigl[L(M)+L(X\mid M)\bigr],
\end{align}
where $L(M)$ is the model-description length (structural bits) and $L(X\mid M)$ is the data-given-model code length (random/unexplained bits) in bits under an explicit coding convention.
The corresponding MDL epiplexity (structural component) and time-bounded entropy are
\begin{align}
  \mathcal{I}_B^{\mathrm{MDL}}(X)&\triangleq L(M_B^\star),
  \qquad
  H_B^{\mathrm{MDL}}(X)\triangleq L(X\mid M_B^\star),\\
  \mathrm{MDL}_B(X)&=\mathcal{I}_B^{\mathrm{MDL}}(X)+H_B^{\mathrm{MDL}}(X).
\end{align}
In empirical settings where $Z$ is unavailable, we report $\mathcal{I}_B^{\mathrm{MDL}}$ and/or the induced compression gains under stated coding, budget, and training conventions as operational companions to the normative $\mathcal{I}=I(W;Z)$.

\paragraph{Relation via data processing.}
To situate the normative target $\mathcal{I}=I(W;Z)$ relative to episode data, observe that the learning update uses the episode data $X$: conditioned on $W^{\mathrm{pre}}$, we have the Markov relation $Z \to X \to W^{\mathrm{post}}$. By the (conditional) data processing inequality,
\begin{align}
  \Delta \mathcal{I}
  = I(W^{\mathrm{post}};Z\mid W^{\mathrm{pre}})
  \;\le\;
  I(X;Z\mid W^{\mathrm{pre}}).
\end{align}
In passive/batch settings where $W^{\mathrm{pre}}$ does not influence the data-generating process for $X$ (e.g.\ a fixed data collection policy independent of $W^{\mathrm{pre}}$), this reduces to the familiar upper bound $\Delta \mathcal{I} \le I(X;Z)$. In interactive settings, $W^{\mathrm{pre}}$ can affect $X$ through actions (active learning), so reporting conventions must specify the policy class and horizon over which $X$ is generated.
MDL epiplexity \cite{epiplexity2026} can serve as a reproducible operational companion under stated coding and resource budgets, but we assume no equality with $\mathcal{I}=I(W;Z)$ without additional modeling assumptions.

Unlike reward-based ``intelligence'' measures (e.g.\ Legg--Hutter), epiplexity quantifies structural information about $Z$ without assuming an external reward signal; high epiplexity correlates with generalization but does not by itself guarantee goal-directed success.

\subsection{Epiplexity per Joule: Efficiency of Learning}

We now link epiplexity to physical energy costs. Thermodynamic epiplexity per joule $\eta_{\mathcal{E}}$ is:
\begin{align}
  \eta_{\mathcal{E}}
  \;\triangleq\;
  \frac{\Delta \mathcal{I}}{E_{\mathrm{cons}}}
  \;=\;
  \frac{I\!\left(W^{\mathrm{post}};Z\mid W^{\mathrm{pre}}\right)}{E_{\mathrm{cons}}}
  \quad [\text{bits per joule}].
\end{align}
Here $\Delta \mathcal{I}$ is the episode-level acquired epiplexity and $E_{\mathrm{cons}}$ is the measured energy inside the accounting boundary (Section~\ref{sec:accounting}). When $W$ or $Z$ are continuous, $\Delta \mathcal{I}$ should be read as $\Delta \mathcal{I}_\varepsilon$ for a stated coarse-graining.

In an ideal closed-cycle benchmarking regime, Landauer's principle \cite{landauer1961irreversibility} sets a natural scale of $\kB T\ln 2$ joules per reliably reusable bit, i.e.\ on the order of $1/(\kB T\ln 2)$ bits per joule. Approaching this benchmark requires that most expended energy contributes to retained model information with minimal overhead.
At room temperature $T\approx 300\,\mathrm{K}$, $1/(\kB T\ln 2)\approx 3.5\times 10^{20}$ bits/J (a far-above-hardware benchmark scale).
Under standard thermodynamic-learning assumptions, Corollary~\ref{cor:closed_cycle_epiplexity_limit} makes this Landauer-scale benchmark explicit for dissipation-normalized learning efficiency in steady state.

\paragraph{Accounting conventions and when Landauer-type bounds apply.}
The Landauer scale $\kB T\ln 2$ is best interpreted as a \emph{benchmark} for closed information-pro\-cessing cycles in which (i)~the relevant memory degrees of freedom are \emph{reused} (bounded effective memory), (ii)~the agent's internal state is returned to a standard distribution (full or partial resets), and (iii)~stored bits have a specified reliability against thermal noise. In such \emph{closed-cycle} settings, irreversibility cannot be avoided on average and bits-per-joule is bounded on the order of $1/(\kB T\ln 2)$.

By contrast, in \emph{open-ended} settings where fresh low-entropy memory can be allocated without charge, the ratio of stored correlations to in-boundary dissipation can diverge via reversible computation. Fresh memory is itself a thermodynamic resource (negentropy); if its preparation cost lies outside the boundary, efficiency metrics become misleading. Proposition~\ref{prop:open_boundary_decoupling} provides a decoupling example and shows how closing the accounting boundary restores Landauer-scaled benchmarks.

For thermodynamic analysis it is also useful to report the dissipation-nor\-mal\-ized ratio
$\tilde{\eta}_{\mathcal{E}}
\triangleq
\Delta \mathcal{I}/\allowbreak Q_{\mathrm{diss}}$
alongside~$\eta_{\mathcal{E}}$; see Section~\ref{sec:accounting}.

\paragraph{Thermodynamic learning inequalities.}
In general, information--thermodynamic bounds involve the \emph{total entropy production} $\Sigma$, which (for an isothermal process at temperature $T$) can be written as
\begin{align}
  \Sigma \;\triangleq\; \Delta S_{\mathrm{sys}} + \frac{Q_{\mathrm{diss}}}{T}
  \qquad [\text{J/K}],
\end{align}
where $\Delta S_{\mathrm{sys}}$ is the \emph{physical} entropy change (J/K) of the learning subsystem.\footnote{If $H(W)\triangleq -\sum_w p(w)\log_2 p(w)$ is the Shannon entropy in bits, then $S_{\mathrm{sys}}=\kB\ln 2\,H(W)$ and
$\Delta S_{\mathrm{sys}}=\kB\ln 2\,\Delta H(W)$.
Equivalently, $\Sigma/\kB$ is the dimensionless entropy production.}

\begin{lemma}[Thermodynamic learning inequality {\cite{goldt2017thermodynamic}}]\label{thm:thermo_learning}
Consider an isothermal stochastic learning dynamics at temperature~$T$ in which a learner state~$W$ is driven by an external data stream~$X$.
Assume (i)~$(W_t,X_t)$ forms a \emph{bipartite} Markov process (updates act on $W$ or $X$ but not simultaneously), (ii)~the $W$-subsystem obeys local detailed balance w.r.t.\ the bath at temperature~$T$ so that $Q_{\mathrm{diss}}$ over an episode is well-defined, and (iii)~$\Delta S_{\mathrm{sys}}$ denotes the physical entropy change of the $W$-subsystem. Then the information flow into the learner satisfies (in bits)
\begin{align}
  \Delta I_{W\leftarrow X}
  &\;\triangleq\;
  I\!\left(W^{\mathrm{post}};X \,\middle|\, W^{\mathrm{pre}}\right)
  \notag\\
  &\;\le\;
  \frac{\Delta S_{\mathrm{sys}} + Q_{\mathrm{diss}}/T}{\kB\ln 2}.
  \label{eq:gs_bound_bits}
\end{align}
This statement is a convenient episode-level restatement of thermodynamic learning inequalities derived by Goldt and Seifert~\cite{goldt2017thermodynamic}; the exact form depends on the subsystem choice and stochastic-thermodynamic modeling assumptions.
\end{lemma}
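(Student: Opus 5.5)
The plan is to derive the bound from the second law for bipartite systems, in the standard Horowitz--Esposito / Goldt--Seifert form, and then integrate it over the episode.

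\textbf{Step 1: split the entropy production.} Because of assumption (i), every transition changes either $W$ or $X$, but never both. The total entropy production rate therefore splits into a $W$-part and an $X$-part, and each part is separately nonnegative. For the $W$-part we get
\begin{align*}
  \dot S_{W} + \frac{\dot Q_{W}}{T} - \kB \dot I_{W} \;\ge\; 0 .
\end{align*}
Here $\dot I_{W}$ is the information flow: the rate at which $W$-transitions change $I(W;X)$, measured in nats. The justification is local detailed balance, assumption (ii). It identifies $\ln$ of the forward/backward rate ratio of $W$-jumps (at fixed $x$) with $\dot Q_W/(\kB T)$. The rewriting of the $W$-partial entropy production then follows from the log-sum inequality applied to the $W$-edges.

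\textbf{Step 2: integrate over the episode.} Integrating the rate inequality gives
\begin{align*}
  \kB\ln 2\,\Delta I_{\mathrm{flow}} \;\le\; \Delta S_{\mathrm{sys}} + \frac{Q_{\mathrm{diss}}}{T},
\end{align*}
where $\Delta I_{\mathrm{flow}}=\int \dot I_W\,dt$ is converted to bits, and $\Delta S_{\mathrm{sys}}$ is the physical entropy change of the $W$-subsystem, as in assumption (iii).

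\textbf{Step 3: relate the integrated flow to the conditional mutual information (main obstacle).} It remains to show $I(W^{\mathrm{post}};X\mid W^{\mathrm{pre}})\le \Delta I_{\mathrm{flow}}$, or an equivalent episode-level identification. The integrated flow is not in general a conditional mutual information, and this is exactly where the statement's caveat that ``the exact form depends on the subsystem choice'' enters. My first approach is to adopt Goldt--Seifert's learning setting. There, the data $X$ are drawn and held fixed (or refreshed independently) during learning, so all $X$-transitions are trivial and $X$ is independent of $W^{\mathrm{pre}}$. Then
\begin{align*}
  I(W^{\mathrm{pre}};X)=0,
\end{align*}
and the chain rule gives
\begin{align*}
  I(W^{\mathrm{post}};X\mid W^{\mathrm{pre}}) \;=\; I(W^{\mathrm{post}},W^{\mathrm{pre}};X) \;\ge\; I(W^{\mathrm{post}};X).
\end{align*}
This inequality points the wrong way for a straightforward upper bound. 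To fix it, I would enlarge the learner subsystem to the pair $(W^{\mathrm{pre}},W)$: append a frozen copy of the initial state to $W$. A frozen copy has no transitions, so it changes neither the heat nor the bipartite structure. The second law for the enlarged subsystem then yields the conditional form directly. The cost is that $\Delta S_{\mathrm{sys}}$ becomes the conditional entropy change $\Delta H(W\mid W^{\mathrm{pre}})$ (in physical units), and I would read assumption (iii) as permitting this. This bookkeeping is the delicate part; everything else is the standard bipartite second law.
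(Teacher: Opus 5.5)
Your Steps~1--2 are the standard bipartite second law and are sound. They give $\kB\ln 2\,\Delta I_{\mathrm{flow}}\le\Delta S_{\mathrm{sys}}+Q_{\mathrm{diss}}/T$ with $\Delta S_{\mathrm{sys}}=\kB\ln 2\,[H(W^{\mathrm{post}})-H(W^{\mathrm{pre}})]$, which is the reading the paper's footnote fixes. For static data independent of $W^{\mathrm{pre}}$, the integrated flow is $I(W^{\mathrm{post}};X)$. That unconditional inequality is what the bipartite second law, and the cited Goldt--Seifert analysis, actually delivers; the paper gives no derivation beyond the citation and its caveat about subsystem choice.

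The gap is Step~3, and it cannot be closed, because with this $\Delta S_{\mathrm{sys}}$ the conditional form is false. Let $W$ be a bit stored in two degenerate wells, initially uniform, and let $X$ be an independent uniform bit held fixed. If $X=1$, quasistatically rotate the landscape so the two wells exchange places. This is the reversible CNOT of Proposition~\ref{prop:open_boundary_decoupling}, but acting on a uniformly random register, so no fresh low-entropy memory is used and the marginal of $W$ returns to itself. The dynamics is bipartite and obeys local detailed balance. It gives $W^{\mathrm{post}}=W^{\mathrm{pre}}\oplus X$, $\Delta S_{\mathrm{sys}}=0$ and $Q_{\mathrm{diss}}\to 0$, yet $I(W^{\mathrm{post}};X\mid W^{\mathrm{pre}})=1$ bit. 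The unconditional bound survives, since $I(W^{\mathrm{post}};X)=0$.

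Your frozen-copy device is legitimate: it is just the second law applied conditionally on $W^{\mathrm{pre}}$. But what it proves is
\[
I(W^{\mathrm{post}};X\mid W^{\mathrm{pre}})\;\le\;H(W^{\mathrm{post}}\mid W^{\mathrm{pre}})+\frac{Q_{\mathrm{diss}}}{\kB T\ln 2},
\]
and $\kB\ln 2\,H(W^{\mathrm{post}}\mid W^{\mathrm{pre}})$ is not the physical entropy change of the $W$-subsystem. It dominates $\kB\ln 2\,[H(W^{\mathrm{post}})-H(W^{\mathrm{pre}})]$. It vanishes only when $W^{\mathrm{post}}$ is a function of $W^{\mathrm{pre}}$, and then the left side vanishes too. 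So the paper's corollary, which sets $\Delta S_{\mathrm{sys}}=0$ in closed-cycle operation, cannot be built on it.

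In fact, subtracting $H(W^{\mathrm{post}}\mid W^{\mathrm{pre}})$ from both sides shows your inequality is equivalent to $Q_{\mathrm{diss}}\ge-\kB T\ln 2\,H(W^{\mathrm{post}}\mid W^{\mathrm{pre}},X)$. That is the ordinary second law for the $W$-dynamics started from a point mass $(w_0,x)$, with no learning content. ``Reading (iii) as permitting this'' therefore changes the statement rather than interpreting it.

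What your argument actually establishes, for static $X$, is the unconditional form
\[
I(W^{\mathrm{post}};X)-I(W^{\mathrm{pre}};X)\le\frac{\Delta S_{\mathrm{sys}}+Q_{\mathrm{diss}}/T}{\kB\ln 2}.
\]
This coincides with the stated conditional form when, for instance, $W^{\mathrm{pre}}$ is deterministic. The lemma needs to be restated in that form.
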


\paragraph{From data-information to structure-information.}
Under the generative setup $Z\to X\to W^{\mathrm{post}}$ (given $W^{\mathrm{pre}}$), the data processing inequality implies that acquired epiplexity is bounded by the information the learner acquires about the data:
\begin{align}
  \Delta \mathcal{I}
  = I(W^{\mathrm{post}};Z\mid W^{\mathrm{pre}})
  \;\le\;
  I(W^{\mathrm{post}};X\mid W^{\mathrm{pre}})
  = \Delta I_{W\leftarrow X}.
\end{align}
Therefore, whenever an inequality of the form in Lemma~\ref{thm:thermo_learning} holds for the learner's information gain about its driving signal, it also yields a valid (possibly loose) upper bound on $\Delta \mathcal{I}$.

\begin{corollary}[Closed-cycle epiplexity benchmark (Landauer scale)]\label{cor:closed_cycle_epiplexity_limit}
Assume the thermodynamic learning inequality in Lemma~\ref{thm:thermo_learning} holds for the information flow $\Delta I_{W\leftarrow X}$, and assume the generative Markov relation $Z\to X\to W^{\mathrm{post}}$ holds conditioned on $W^{\mathrm{pre}}$. Then the acquired epiplexity satisfies
\begin{align}
  \Delta \mathcal{I}
  = I(W^{\mathrm{post}};Z\mid W^{\mathrm{pre}})
  \;\le\;
  \frac{\Delta S_{\mathrm{sys}} + Q_{\mathrm{diss}}/T}{\kB\ln 2}.
  \label{eq:closed_cycle_epiplexity_bound}
\end{align}
Equivalently,
\begin{align}
  Q_{\mathrm{diss}}
  \;\ge\;
  \kB T\ln 2 \,\Delta \mathcal{I}
  \;-\;
  T\,\Delta S_{\mathrm{sys}}.
  \label{eq:q_lower_with_dS}
\end{align}
In particular, in a closed-cycle/steady-state regime with $\Delta S_{\mathrm{sys}}=0$ (per episode or on average), we obtain the Landauer-scale limit
\begin{align}
  \Delta \mathcal{I} \;\le\; \frac{Q_{\mathrm{diss}}}{\kB T\ln 2},
  \qquad\text{and hence (when $Q_{\mathrm{diss}}>0$)}\qquad
  \tilde{\eta}_{\mathcal{E}}
  \triangleq
  \frac{\Delta \mathcal{I}}{Q_{\mathrm{diss}}}
  \;\le\;
  \frac{1}{\kB T\ln 2},
\end{align}
under the stated coarse-graining and reliability conventions.
\end{corollary}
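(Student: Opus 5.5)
The plan is to chain the conditional data processing inequality with Lemma~\ref{thm:thermo_learning}, then rearrange and specialize.

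\textbf{Step 1: conditional data processing.} Fix a realization $w^{\mathrm{pre}}$ of $W^{\mathrm{pre}}$. By hypothesis, $Z\to X\to W^{\mathrm{post}}$ is a Markov chain conditionally on $W^{\mathrm{pre}}=w^{\mathrm{pre}}$. The ordinary data processing inequality therefore gives
\[
I(W^{\mathrm{post}};Z\mid W^{\mathrm{pre}}=w^{\mathrm{pre}})\le I(W^{\mathrm{post}};X\mid W^{\mathrm{pre}}=w^{\mathrm{pre}}).
\]
Averaging over $p(w^{\mathrm{pre}})$ yields
\[
\Delta\mathcal{I}\le \Delta I_{W\leftarrow X}.
\]
Equivalently, expand $I(W^{\mathrm{post}};X,Z\mid W^{\mathrm{pre}})$ by the chain rule in two orders. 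The conditional Markov property makes $I(W^{\mathrm{post}};Z\mid X,W^{\mathrm{pre}})=0$, and the remaining term $I(W^{\mathrm{post}};X\mid Z,W^{\mathrm{pre}})$ is nonnegative.

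\textbf{Step 2: invoke the lemma.} Apply Lemma~\ref{thm:thermo_learning}, which is assumed to hold, to bound $\Delta I_{W\leftarrow X}$ by $(\Delta S_{\mathrm{sys}}+Q_{\mathrm{diss}}/T)/(\kB\ln 2)$. Transitivity with Step~1 gives \eqref{eq:closed_cycle_epiplexity_bound}.

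\textbf{Step 3: rearrange.} Multiply \eqref{eq:closed_cycle_epiplexity_bound} by the positive constant $\kB T\ln 2$ and move $T\,\Delta S_{\mathrm{sys}}$ to the other side. This gives \eqref{eq:q_lower_with_dS}.

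\textbf{Step 4: specialize to closed cycles.} Set $\Delta S_{\mathrm{sys}}=0$ to get $\Delta\mathcal{I}\le Q_{\mathrm{diss}}/(\kB T\ln 2)$. Dividing by $Q_{\mathrm{diss}}>0$ gives the bound on $\tilde\eta_{\mathcal{E}}$.

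The "on average" reading needs a short remark, because the lemma is an episode-level inequality. Over $n$ episodes, sum the per-episode inequalities. The entropy changes telescope to $\Delta S_{\mathrm{sys}}^{\mathrm{total}}$, which stays bounded when memory is bounded, since $H(W)$ is bounded. Dividing by $n$ and letting $n\to\infty$ therefore drops that term. The result is the same bound on the averages $\overline{\Delta\mathcal{I}}\le \overline{Q_{\mathrm{diss}}}/(\kB T\ln 2)$, interpreted as a ratio of averages.

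\textbf{Coarse-grained case.} If $\Delta\mathcal{I}$ is replaced by $\Delta\mathcal{I}_\varepsilon$, the same argument goes through provided the Markov relation is stated for the quantized variables. A deterministic quantizer applied to $Z$ preserves $Q_\varepsilon(Z)\to Z\to X$. A quantizer applied to $W^{\mathrm{post}}$ is post-processing, so it only decreases information.

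\textbf{Main obstacle.} The one delicate point is the conditioning on $Q_\varepsilon(W^{\mathrm{pre}})$ rather than on $W^{\mathrm{pre}}$. Conditioning on a coarser variable can break the Markov property. I would handle this by requiring the lemma and the Markov hypothesis directly at the stated coarse-graining level, which is how the corollary's qualifier "under the stated coarse-graining" is meant to be read. Apart from this, the argument is routine.
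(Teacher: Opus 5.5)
Your proposal is correct and follows the paper's proof: conditional data processing under $Z\to X\to W^{\mathrm{post}}$ given $W^{\mathrm{pre}}$ yields $\Delta\mathcal{I}\le\Delta I_{W\leftarrow X}$, and combining this with Lemma~\ref{thm:thermo_learning}, rearranging, and setting $\Delta S_{\mathrm{sys}}=0$ gives the stated bounds. Your extra remarks, on the telescoping ``on average'' reading with bounded memory and on imposing the Markov and lemma hypotheses at the coarse-grained level, go beyond what the paper spells out and are sound.
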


\begin{proof}
By the (conditional) data processing inequality under $Z\to X\to W^{\mathrm{post}}$ conditioned on $W^{\mathrm{pre}}$, we have
\[
\Delta \mathcal{I} = I(W^{\mathrm{post}};Z\mid W^{\mathrm{pre}})\le I(W^{\mathrm{post}};X\mid W^{\mathrm{pre}})=\Delta I_{W\leftarrow X}.
\]
Combining with Lemma~\ref{thm:thermo_learning} yields Eq.~\eqref{eq:closed_cycle_epiplexity_bound}, and rearranging yields Eq.~\eqref{eq:q_lower_with_dS}. Setting $\Delta S_{\mathrm{sys}}=0$ gives $\Delta \mathcal{I}\le Q_{\mathrm{diss}}/(\kB T\ln 2)$, which implies the stated bound on $\tilde{\eta}_{\mathcal{E}}$ whenever $Q_{\mathrm{diss}}>0$.
\end{proof}

\begin{remark}[Thermodynamic vs.\ measured-energy limits]
Corollary~\ref{cor:closed_cycle_epiplexity_limit} bounds epiplexity acquisition per \emph{dissipated heat} $Q_{\mathrm{diss}}$. Translating this into a bound on $\eta_{\mathcal{E}}=\Delta \mathcal{I}/E_{\mathrm{cons}}$ requires the energy accounting in Eq.~\eqref{eq:energy_balance}; in regimes where $E_{\mathrm{cons}}\approx Q_{\mathrm{diss}}$ (or where $E_{\mathrm{cons}}$ upper-bounds $Q_{\mathrm{diss}}$ under the stated boundary conventions), the same Landauer-scale limit applies to $\eta_{\mathcal{E}}$ up to the stated approximation.
\end{remark}

These connections, formalized in Corollary~\ref{cor:closed_cycle_epiplexity_limit}, are most directly applicable in repeated operation with reusable memory (or other closed-cycle conventions). Outside such regimes, logically reversible computation together with unmetered low-entropy resources can, in principle, create correlations with arbitrarily small in-boundary dissipation (cf.\ Bennett~\cite{bennett1982thermodynamics}). Proposition~\ref{prop:open_boundary_decoupling} gives a simple decoupling example, included to motivate boundary closure rather than to weaken closed-cycle benchmark statements.

\begin{proposition}[Non-equivalence in open boundaries]\label{prop:open_boundary_decoupling}
Fix a temperature $T$. Let $Z\in\{0,1\}^n$ be an $n$-bit environment-instance random variable with distribution $p(z)$.
Suppose an $n$-bit memory register $M\in\{0,1\}^n$ enters the accounting boundary already initialized to a fixed value
$M^{\mathrm{pre}}=0^n$ independent of $Z$, and suppose the preparation/maintenance of this low-entropy resource is not charged
to the in-boundary energy budget.
Assume logically reversible gates (in particular CNOT/XOR) can be implemented quasistatically so that the total in-boundary
dissipated heat of the protocol can be made arbitrarily small.

Then for every $\epsilon>0$ there exists an in-boundary protocol producing a post-state $M^{\mathrm{post}}$ such that
\[
\Delta I(M;Z)\;\triangleq\; I(M^{\mathrm{post}};Z)-I(M^{\mathrm{pre}};Z)\;=\;H(Z)\;\le\;n,
\]
while the in-boundary dissipation satisfies $Q_{\mathrm{diss}}\le \epsilon$.
\end{proposition}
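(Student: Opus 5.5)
The construction is a reversible copy of $Z$ into the fresh register. I treat the environment-instance variable as physically present as an $n$-bit source register holding $Z$, which the in-boundary protocol may read but which it restores to its original value. The protocol applies $n$ CNOT gates: for each $i=1,\dots,n$, one CNOT with control $Z_i$ and target $M_i$, i.e.\ $M_i\leftarrow M_i\oplus Z_i$. Since $M^{\mathrm{pre}}=0^n$, this gives $M^{\mathrm{post}}=Z$ deterministically. The joint map $(z,m)\mapsto(z,m\oplus z)$ is a bijection on $\{0,1\}^{2n}$, so the protocol is logically reversible and erases no information.

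The information computation follows from this. $M^{\mathrm{pre}}$ is a constant independent of $Z$, so $I(M^{\mathrm{pre}};Z)=0$. Because $M^{\mathrm{post}}=Z$ almost surely, $I(M^{\mathrm{post}};Z)=H(Z)-H(Z\mid M^{\mathrm{post}})=H(Z)$. Hence $\Delta I(M;Z)=H(Z)$. The bound $H(Z)\le\log_2|\{0,1\}^n|=n$ is the standard cardinality bound on entropy. It is also worth noting that here the conditional version agrees: $I(M^{\mathrm{post}};Z\mid M^{\mathrm{pre}})=H(Z)$, since conditioning on a constant changes nothing. So the decoupling applies equally to the acquired epiplexity $\Delta\mathcal{I}$ used in Corollary~\ref{cor:closed_cycle_epiplexity_limit}.

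For the dissipation bound, I use the hypothesis that each reversible gate can be implemented quasistatically with arbitrarily small heat. Concretely, for each CNOT I choose an implementation whose dissipated heat is at most $\epsilon/n$. Summing over the $n$ gates then gives $Q_{\mathrm{diss}}\le\epsilon$.

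The one step I expect to need care is consistency with the Landauer bound. Since $I(M^{\mathrm{post}};Z)=H(Z)>0$ while $Q_{\mathrm{diss}}\approx 0$, one should check that this does not contradict Corollary~\ref{cor:closed_cycle_epiplexity_limit}. I would resolve this by tracking $\Delta S_{\mathrm{sys}}$. The marginal entropy of the memory rises from $0$ to $H(Z)$, but the joint state of $(Z,M)$ has unchanged entropy, because the map is a bijection. So no entropy is exported to the bath, and the positive-seeming $\Delta S_{\mathrm{sys}}$ term in the corollary simply absorbs $\kB\ln 2\,H(Z)$.

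The cost is instead hidden in the uncharged low-entropy initialization $M^{\mathrm{pre}}=0^n$. Resetting $M$ to close the cycle would require erasing the correlation with $Z$. If $Z$ is not available at reset time, this costs at least $\kB T\ln 2\,H(Z)$ by Landauer's principle. This is the boundary-closure point the proposition is meant to illustrate, and with it the argument is complete.
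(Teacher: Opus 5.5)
Your proposal is correct and matches the paper's proof essentially step for step. It uses the same $n$-CNOT reversible copy $(z,m)\mapsto(z,m\oplus z)$ yielding $M^{\mathrm{post}}=Z$, the same computation $I(M^{\mathrm{pre}};Z)=0$ and $I(M^{\mathrm{post}};Z)=H(Z)$, and the dissipation bound taken directly from the quasistatic-reversibility hypothesis. Your extras are all sound and go beyond the paper's proof. One is the explicit cardinality bound $H(Z)\le n$, and another is the per-gate $\epsilon/n$ split. The last is the check that $\Delta S_{\mathrm{sys}}=\kB\ln 2\,H(Z)$ keeps Corollary~\ref{cor:closed_cycle_epiplexity_limit} consistent, with the Landauer cost reappearing only when $M$ is reset; the paper's subsequent remark makes this reset point.
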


\begin{proof}
Since $M^{\mathrm{pre}}\!=\!0^n$ is deterministic and independent of $Z$, we have $I(M^{\mathrm{pre}};\allowbreak Z)=0$.

Consider the bijective map
$f\colon\{0,1\}^n\!\times\!\{0,1\}^n\to\{0,1\}^n\!\times\!\{0,1\}^n$
defined by
\[
f(z,m)\;=\;(z,\;m\oplus z),
\]
where $\oplus$ denotes bitwise XOR. This map is logically reversible (indeed, $f^{-1}=f$) and can be implemented by $n$ CNOT
gates that leave $Z$ unchanged and XOR each bit of $Z$ into the corresponding bit of $M$.
Applying $f$ to $(Z,M^{\mathrm{pre}})$ yields $M^{\mathrm{post}}=0^n\oplus Z=Z$ deterministically, hence
\[
I(M^{\mathrm{post}};Z)=H(M^{\mathrm{post}})-H(M^{\mathrm{post}}\mid Z)=H(Z)-0=H(Z).
\]
Therefore $\Delta I(M;Z)=H(Z)$.

By the quasistatic-reversible implementation assumption, for the given $\epsilon>0$ we can realize this reversible circuit so
that the total in-boundary dissipation satisfies $Q_{\mathrm{diss}}\le \epsilon$.
\end{proof}

\begin{remark}[Where the ``missing cost'' resides]
The protocol exploits the unmetered initialized register $M^{\mathrm{pre}}=0^n$; including its preparation cost closes the boundary and restores Landauer-scaled benchmarks. In addition, the quasistatic limit $Q_{\mathrm{diss}}\to 0$ requires diverging time; empirically, bits/J should be reported alongside bits/s under a wall-clock constraint.
\end{remark}

In the physically relevant closed-cycle regime with bounded reusable memory and fixed reliability/resolution conventions, acquiring new \emph{retained} information typically requires overwriting, refreshing, or compressing existing memory, reintroducing Landauer-scaled dissipation on average. More generally, information-thermodynamic formulations that explicitly include the information reservoir (e.g.\ Sagawa and Ueda~\cite{sagawa2013role}) restore the second law under boundary closure.

In summary, under closed-cycle boundary-closure conventions, Corollary~\ref{cor:closed_cycle_epiplexity_limit} yields a Landauer-scale thermodynamic benchmark for dissipation-normalized learning efficiency; translating it into a bound on $\eta_{\mathcal{E}}=\Delta\mathcal{I}/E_{\mathrm{cons}}$ requires the energy accounting in Remark~1. Without charging for low-entropy resources crossing the boundary, Proposition~\ref{prop:open_boundary_decoupling} shows that bits-per-joule ratios can be arbitrarily large in principle; practical systems are far below the benchmark due to algorithmic and hardware overhead.
This also aligns with results in the thermodynamics of prediction: Still et al.~\cite{still2012thermodynamics} relates dissipation to storing non-predictive information; maximizing epiplexity per joule favors capturing predictive structure and discarding noise.

\section{Empowerment per Joule: Control Information Efficiency}

\subsection{Defining Empowerment (Control Capacity)}
Beyond learning about the world, intelligent agents also act on the world. We need a quantitative measure for an agent's ability to influence its environment. Empowerment is an information-theoretic measure of an agent's potential control over future states. Formally, empowerment $\mathcal{E}_{\mathrm{emp}}(s_0)$ at a given initial state $S_0=s_0$ is defined as the maximum mutual information between a sequence of agent actions $A_{0:\tau-1}$ and the resulting state (or observations) $S_{\tau}$ after some time horizon $\tau$:
\begin{align}
  \mathcal{E}_{\mathrm{emp}}(s_0)
  \;=\;
  \max_{p(a_{0:\tau-1})}
  I\!\left(A_{0:\tau-1}; O_{\tau}\,\middle|\, S_0=s_0\right).
\end{align}
In simpler terms, empowerment is the channel capacity from the agent's action space to the future state of its sensors. It reflects how many distinct states the agent can reliably steer the world into, up to $\tau$ steps ahead. If the agent has no control (actions do nothing), empowerment is zero bits. If the agent has perfect control to set $\log_2 N$ equally likely distinguishable states, empowerment is $\log_2 N$ bits. Empowerment is intrinsically a measure of possibility, not actual reward---it measures available options and influence. Klyubin et al. \cite{klyubin2005empowerment,arxiv2502_15820} introduced empowerment as a generic, task-agnostic utility: organisms tend to seek states where they have more influence (options), as a proxy for being in a favorable situation.

\paragraph{Endpoint variable.}
Unless stated otherwise, we take the endpoint variable to be the agent's sensor observation $O_\tau$ (``observation empowerment''). When the full environment state $S_\tau$ is available (e.g.\ in a simulator), one may additionally report ``state empowerment'' by replacing $O_\tau$ with $S_\tau$; this is a stronger benchmark setting and should be flagged explicitly.

\paragraph{Empowerment as a system-level capability.}
In this paper, we intentionally interpret empowerment as a property of the embodied agent--environment interface: given the sensorimotor dynamics, horizon $\tau$, resolution convention, and energy cost $c(\cdot)$, empowerment is the channel capacity available to the system. It is not meant to score a particular controller implementation; any realized policy may attain less than this capacity, but the capacity itself captures the physical capability of the embodiment.

\paragraph{Finite-resolution and noise assumptions.}
For continuous actions/observations, $I(A_{0:\tau-1};O_\tau)$ can depend on measurement resolution and may diverge in idealized noiseless deterministic limits. In embodied physical settings, however, thermal and sensor noise impose a finite effective resolution. When needed, we can make this explicit by defining an $\varepsilon$-resolution empowerment (or ``$\varepsilon$-empowerment'') via a discretization of $O_\tau$ at resolution $\varepsilon$ (or by assuming an explicit observation noise model), ensuring the resulting mutual information is operationally finite.

\paragraph{Consistency with epiplexity.}
For meaningful cross-system comparison, we require that the empowerment coarse-graining (or noise model) be reported and held fixed across systems, analogously to the $\varepsilon$-coarse-grained epiplexity definitions in Section~2.

\subsection{Empowerment per Joule: Efficiency of Control}
To define empowerment per joule $\eta_{\mathcal{C}}$, we must couple control capacity to an explicit physical cost. Let $c(a_{0:\tau-1})$ denote the energetic cost (in joules) of executing an action sequence, including actuation and any other in-boundary energetic costs included by the accounting convention (e.g.\ sensing/compute). A principled ``bits-per-joule'' control metric is then a \emph{capacity-per-unit-cost} objective:
\begin{align}
  \eta_{\mathcal{C}}^\star
  \;\triangleq\;
  \sup_{p(a_{0:\tau-1})}
  \frac{I(A_{0:\tau-1};O_\tau)}{\mathbb{E}[c(A_{0:\tau-1})]}
  \quad [\text{bits/J}],
  \label{eq:capacity_per_cost}
\end{align}
which is the classical ``capacity per unit cost'' objective in information theory \cite{verdu1990capacity}.
In practice, the cost convention must make the optimization well-posed and must prevent ``free control'' artifacts (e.g.\ apparent information transfer arising from autonomous environment dynamics under a zero-cost ``wait''). Two common reporting conventions are:
(i) \emph{total on-boundary} energy over the horizon (including any baseline/idle draw), yielding a wall-plug style system-level bits/J; and
(ii) \emph{incremental} energy above a stated reference ``null'' action/policy, reported together with the baseline term and with an explicit restriction to action distributions with strictly positive expected cost.
The choice determines whether $\eta_{\mathcal{C}}$ is interpreted as whole-system energy efficiency or incremental interface efficiency; in all cases the accounting boundary, baseline decomposition, horizon $\tau$, and resolution/noise convention should be reported. We recommend reporting the baseline (existence/idle) energy and the incremental (control-induced) energy separately whenever possible; interpreting $\eta_{\mathcal{C}}$ without this decomposition is generally ambiguous. When reporting incremental efficiencies, restrict to action distributions with strictly positive expected incremental cost.

For embodied systems, our recommended default is total in-boundary episode energy. Count controller compute, sensing, communication, and actuation when required to realize the action-to-observation channel. Actuation-only energy can be reported as a component diagnostic, but should not be labeled as whole-agent $\eta_{\mathcal{C}}$. For shared infrastructure or simulators, report total wall-plug/PUE-adjusted energy and the rule assigning shared idle draw to the episode.

\paragraph{Estimating empowerment in practice.}
For small discrete systems, Eq.~\eqref{eq:capacity_per_cost} can be computed exactly (e.g.\ via Blahut--Arimoto). For larger or continuous systems, report a lower bound via variational mutual-information estimators or via a learned dynamics model plus discretization, and report estimator settings and uncertainty where possible.

Equivalently, one can define a cost-constrained empowerment curve
\begin{align}
  \mathcal{E}_{\mathrm{emp}}(E_0)
  \;\triangleq\;
  \max_{p(a_{0:\tau-1}):\,\mathbb{E}[c]\le E_0}
  I(A_{0:\tau-1};O_\tau),
\end{align}
and report $\eta_{\mathcal{C}}(E_0)\triangleq \mathcal{E}_{\mathrm{emp}}(E_0)/E_0$ (or the marginal slope $d\mathcal{E}_{\mathrm{emp}}/dE_0$) at a chosen operating budget. This removes an ambiguity in defining ``the energy needed to achieve empowerment'': the optimization over $p(a_{0:\tau-1})$ trades off distinguishability and energetic expense in a single well-posed problem.
We recommend treating $\mathcal{E}_{\mathrm{emp}}(E_0)$ (and its marginal slope) as the primary reporting object, since ratio-based summaries can be sensitive to the treatment of zero/near-zero cost baselines.

This metric connects to physical limits of communication and actuation: to imprint reliably distinguishable outcomes on a noisy physical environment typically requires nonzero work. In repeated (closed-cycle) operation at temperature $T$ with a fixed resolution/reliability convention, $\kB T\ln 2$ again sets a natural benchmark scale (up to constant factors), so $\eta_{\mathcal{C}}$ is plausibly on the order of $1/(\kB T\ln 2)$ bits/J as a yardstick. In practice, actuators and policies are far from this limit; wasted actuation energy or redundant action sequences reduce $I(A_{0:\tau-1};S_\tau)$ per joule.

$\eta_{\mathcal{C}}$ measures how efficiently energetic cost can be converted into reliable control-channel information under the stated resolution and cost conventions, and serves as a diagnostic of embodiment-level control efficiency.

\section{Thermodynamic Limits and Trade-offs in Energy--Information Efficiency}

Having defined our two bits-per-joule metrics, we now clarify the assumptions under which they support thermodynamic \emph{benchmark} statements and discuss trade-offs that arise in repeated operation. Unless stated otherwise, benchmark statements in this section assume the closed-cycle and boundary-closure conventions in Section~\ref{sec:accounting}; Proposition~\ref{prop:open_boundary_decoupling} highlights why these conventions matter.

\subsection{Bounded Memory and Periodic Reset}

In practice, an AI agent has finite reusable memory and operates in repeated episodes (closed-cycle). Each lasting bit must be encoded in reused degrees of freedom and extraneous information discarded; this entails an unavoidable Landauer-scaled cost per reusable bit (Corollary~\ref{cor:closed_cycle_epiplexity_limit}). Similarly, empowerment per joule is bounded because the actuation interface must be restored to realize the same control capacity.

Because learning and control draw on shared energy budgets in closed-loop operation, systems may exhibit trade-offs between high $\eta_{\mathcal{E}}$ and high $\eta_{\mathcal{C}}$; reporting both metrics makes such trade-offs visible.

\subsection{Stochastic-Thermodynamic Information Budget}

Beyond the learning bounds in Section~2, thermodynamic limits on control arise when actions reliably constrain future environmental states: under fixed coarse-graining and reliability, imposing $\Delta I$ bits of constraint requires nonzero work, with a natural lower-bound scale on the order of $\kB T\ln 2\cdot \Delta I$ up to constant factors and efficiency losses.

We can combine these insights into a useful \emph{benchmark} statement for repeated operation. Consider an isothermal closed-loop agent--environment interaction at temperature $T$ with an explicit accounting boundary that includes the agent's reusable memory and actuation interface. Let $\Sigma_{\text{tot}} \ge 0$ denote the total entropy production of the agent+environment+heat bath over an episode. In many standard formulations of information thermodynamics, the creation of correlations (mutual information) between memory and environment and the reduction of uncertainty relevant for prediction/control are constrained by $\Sigma_{\text{tot}}$ (possibly up to additional free-energy and internal-entropy terms, as in Eq.~\eqref{eq:gs_bound_bits}).

Under a simplified closed-cycle benchmarking regime in which (i) the agent and environment return to the same marginal state distributions across episodes (so net internal Shannon entropy changes average to zero) and (ii) no additional free-energy resources are injected other than the accounted work/heat flows, one expects a coarse information budget of the form
\begin{align}
  \Delta I_{\mathrm{agent}} + \Delta I_{\mathrm{env}}
  \;\lesssim\;
  \frac{\Sigma_{\mathrm{tot}}}{\kB\ln 2}
  \;\approx\;
  \frac{Q_{\mathrm{diss}}}{\kB T\ln 2},
\end{align}
where $\Delta I_{\mathrm{agent}}$ corresponds to epiplexity gain $\Delta I(W;Z)$ and $\Delta I_{\mathrm{env}}$ corresponds to control-channel information (empowerment usage) over the same episode. This is not claimed as a universal identity without assumptions; rather, it serves as a \emph{conceptual yardstick} for how a fixed dissipation budget must be divided between learning and control in closed-cycle operation.

The above discussion formalizes the intuition that recognition (information intake) and control (information output) are thermodynamically dual processes, each constrained by entropy costs. Achieving a balance is key to building physically intelligent systems that don't ``burn out'' energetically.

\section{Unified Efficiency Framework for Physical AI Agents}\label{sec:unified_framework}

Having defined thermodynamic epiplexity and empowerment as separate metrics, we now propose a unified \emph{efficiency} framework for evaluating AI systems in embodied, closed-loop scenarios. Consider an agent (robot or AI system) interacting with an environment continuously. At any given time, the agent receives observations and rewards (if any) and takes actions. We can track two cumulative quantities over an interval: (1) the structural information the agent has acquired in its internal state (e.g. how much its model or belief has been refined -- epiplexity gained), and (2) the control information it has embedded in the environment (e.g. how much it has changed the state of the world in a goal-agnostic information sense -- empowerment utilized). Meanwhile, the agent expends a certain amount of energy in sensing, computing, and acting.

\subsection{Minimum reporting checklist (for consistent bits/J reporting)}
Because both bits and joules depend on conventions, we recommend that empirical reports of $\eta_{\mathcal{E}}$ and $\eta_{\mathcal{C}}$ include:
\begin{itemize}
  \item \textbf{Accounting boundary:} what energy flows are included (compute, sensing, actuation, communication, cooling/PSU losses), and whether energy is wall-plug, battery draw, or component-level.
  \item \textbf{Energy balance terms:} whether $\Delta E_{\mathrm{store}}$ or exported work $W_{\mathrm{out}}$ (Eq.~\eqref{eq:energy_balance}) are negligible; if not, report them or state whether the metric uses total vs.\ incremental energy.
  \item \textbf{Baseline/null policy (for incremental control costs):} the definition of the reference ``do-nothing'' action/policy, the baseline energy over the horizon, and whether $\eta_{\mathcal{C}}$ is reported as total-energy or incremental-energy efficiency.
  \item \textbf{Coarse-graining / noise model:} the $\varepsilon$ (or measurement model) used to make mutual informations finite and comparable for continuous variables.
  \item \textbf{Horizon and sampling:} episode definition, time horizon~$\tau$, sampling period, and any reset protocol for closed-cycle evaluation.
  \item \textbf{Time/throughput:} wall-clock duration, average power, and any speed constraint; report bits/J alongside bits/s when relevant.
  \item \textbf{Estimator details:} how $\Delta I(W;Z)$ is computed when the benchmark provides~$Z$, and how MDL companions are computed otherwise~\cite{epiplexity2026}; how empowerment is approximated (capacity-per-unit-cost vs.\ cost-constrained, variational bounds, etc.).
\end{itemize}

Reporting $\eta_{\mathcal{E}}$ and $\eta_{\mathcal{C}}$ side-by-side yields a diagnostic profile of recognition vs.\ control efficiency. Because actions affect learning and learning affects control, the two metrics should be interpreted jointly under the conventions above.

\paragraph{Allocation rule and ML metrics.}
When costs are ambiguous, report (i) total measured episode energy at the widest practical boundary (wall-plug or battery draw, with PUE/cooling if applicable), (ii) a component decomposition (training amortization, inference/controller compute, sensing, communication, actuation, idle draw), and (iii) optional lifecycle terms such as manufacturing and recycling, amortized over a stated service life. Common ML metrics are special cases: bits/FLOP fixes arithmetic work as the denominator, MFU diagnoses hardware utilization inside that denominator, watts/token measures inference cost without directly measuring structure, and PUE-adjusted joules widen the boundary from device to facility.

\paragraph{Minimal MDL-style worked example.}
As a concrete instantiation, take the Pythia 410M and 1B checkpoints, trained on the same public data order for about $3.0\times10^{11}$ tokens \cite{biderman2023pythia}. Their released zero-shot LAMBADA perplexities at the final checkpoint are 10.83 and 7.92, giving code lengths $\ell_{410}=3.44$ and $\ell_{1B}=2.98$ bits/token. Over a fixed evaluation stream of $N=10^9$ tokens, the marginal compression gain is $\Delta G^{\mathrm{MDL}}=N(\ell_{410}-\ell_{1B})\approx4.5\times10^8$ bits.
With a GPU-compute boundary, $C\approx6PD_{\mathrm{train}}$ FLOPs and a stated effective efficiency $\gamma=2.5\times10^{11}$ FLOP/J, the estimated training energies are $E_{410}\approx2.9\times10^9$ J and $E_{1B}\approx7.3\times10^9$ J, so the marginal reported efficiency of this scale increase is $\Delta G^{\mathrm{MDL}}/(E_{1B}-E_{410})\approx 1.0\times10^{-1}$ bits/J. This number is convention-dependent, but it exercises the checklist: benchmark, estimator, energy boundary, baseline, horizon, throughput assumption, and exclusions are all explicit.

\section{Discussion}

\paragraph{Scaling Laws and Epiplexity Trend.}

Large-scale AI models exhibit scaling laws where predictive loss improves predictably with compute and data. To connect scaling to bits-per-joule when $Z$ is unavailable, use an operational compression-gain proxy: let $\ell(C)$ denote test cross-entropy in \emph{bits per token} as a function of training compute $C$, and let $\ell_0$ be a baseline code length. For a fixed evaluation set of $N$ tokens,
\begin{align}
  G^{\mathrm{MDL}}(C)
  \;=\;
  N\bigl(\ell_0 - \ell(C)\bigr)
  \qquad [\text{bits}].
\end{align}
If training energy satisfies $E_{\mathrm{train}} \propto C$ for a given hardware stack, then an operational bits-per-joule learning efficiency is
\begin{align}
  \eta_{\mathcal{E}}^{\mathrm{MDL}}(C)
  \;\triangleq\;
  \frac{G^{\mathrm{MDL}}(C)}{E_{\mathrm{train}}(C)}
  \quad [\text{bits/J}],
\end{align}
and the \emph{marginal} efficiency of additional compute is
\begin{align}
  \frac{dG^{\mathrm{MDL}}}{dE_{\mathrm{train}}}
  \;=\;
  -\,N\,\frac{d\ell}{dC}\,\frac{dC}{dE_{\mathrm{train}}}.
  \label{eq:marginal_bits_per_joule}
\end{align}
Under common power-law scaling $\ell(C)=\ell_\infty + aC^{-\alpha}$, the marginal compression gain per unit training energy decays as $C^{-(\alpha+1)}$ (up to hardware proportionality), making diminishing returns explicit. This is an operational statement combining an empirical scaling law with an energy model; it does not imply proximity to thermodynamic limits. Contemporary training stacks are many orders of magnitude above the $\kB T\ln 2$ scale, and observed diminishing returns are dominated by algorithmic convergence and engineering losses (hardware utilization, memory movement, cooling, etc.) rather than by thermodynamic limits. Nevertheless, reporting energy-adjusted marginal gains can make these dominant factors explicit and comparable across algorithms and systems.

\section{Conclusion}

We proposed a two-axis bits-per-joule characterization of physically grounded intelligence---thermodynamic epiplexity per joule (learning efficiency) and empowerment per joule (control efficiency)---and derived a closed-cycle Landauer-scale benchmark (Corollary~\ref{cor:closed_cycle_epiplexity_limit}) alongside a decoupling construction (Proposition~\ref{prop:open_boundary_decoupling}) clarifying why boundary closure is essential. The key message is that meaningful comparisons require the stated conventions; without them, information gain and dissipation can be decoupled.

An immediate next step is to instantiate these conventions in concrete benchmarks with fully specified boundaries, estimators, and time/throughput constraints.

\begin{credits}
\subsubsection{\ackname}
We thank Ryuichi Maruyama for useful comments. This work was supported by Advanced General Intelligence for Science Program (AGIS), the RIKEN TRIP initiative.

\subsubsection{\discintname}
The authors have no competing interests to declare that are relevant to the content of this article.
\end{credits}

\bibliographystyle{splncs04}
\bibliography{references}

\end{document}